\documentclass{article}
\makeatletter
\input{arxiv.sty}
\makeatother
\usepackage[utf8]{inputenc}
\usepackage[T1]{fontenc}
\usepackage[authoryear]{natbib}
\usepackage[hypertexnames=false]{hyperref}
\hypersetup{pdftitle={Certified Behavioral Retention for Continual Adaptation with Held-Out Anchors},pdfauthor={Ahanaf Hasan Ariq and Abrar Shahid}}
\usepackage{url}
\usepackage{booktabs}
\usepackage{amsfonts}
\usepackage{amsmath,amssymb}
\usepackage{amsthm}
\usepackage{nicefrac}
\usepackage{microtype}
\usepackage{xcolor}
\usepackage{algorithm}
\usepackage{algorithmic}
\usepackage{tikz}
\definecolor{methodblue}{HTML}{4E79A7}
\definecolor{methodteal}{HTML}{2A9D8F}
\definecolor{methodorange}{HTML}{E07A3F}
\definecolor{methodgray}{HTML}{6B7280}
\setcitestyle{authoryear,square,semicolon,aysep={,},yysep={;}}
\newtheorem{assumption}{Assumption}
\newtheorem{theorem}{Theorem}
\newtheorem{corollary}{Corollary}
\newtheorem{proposition}{Proposition}
\title{Certified Behavioral Retention for Continual Adaptation with Held-Out Anchors}
\author{%
  Ahanaf Hasan Ariq \\
  Ideal School and College \\
  \And
  Abrar Shahid \\
  Notre Dame College \\
}
\date{}
\begin{document}
\maketitle
\begin{abstract}
Continual adaptation requires deciding not only how to update a model, but whether to commit the update without exceeding a declared budget for changes to protected behavior. We introduce Certified Anchor-Based Adaptation (CABA), a candidate-agnostic post-update gate that reserves historical observations from candidate optimization, compares old and candidate policies on frozen representations, and selects the largest tested interpolation coefficient whose worst-anchor forward KL divergence does not exceed the budget. The finite retention check requires neither action labels nor complete trajectories. We develop a theory-led certification framework that separates measured anchor retention from conditional off-anchor guarantees. Under explicit representation-coverage and Lipschitz assumptions, an accepted update satisfies a uniform behavioral-divergence bound on protected support. The analysis provides regularity bounds for affine softmax heads, high-probability coverage and compression guarantees, interpolation-path monotonicity conditions, grid-resolution bounds, and trajectory-level and sequential-composition guarantees. These results specify the assumptions needed to interpret a finite acceptance check; they do not assert global safety or preserved task accuracy. In a controlled UCI Human Activity Recognition study used as a non-stationary sensor-policy proxy, CABA reduces mean forgetting from 1.01 percentage points for unconstrained adaptation to 0.00 at budget $\epsilon=0.05$, while reducing final average accuracy from 93.04\% to 88.48\%. This proof of concept illustrates a retention--plasticity trade-off; it does not establish foundation-model or robotic performance. CABA supplies an auditable retention check for compatible candidate generators exposing comparable policy distributions and a declared update path, while deployment additionally requires separate evidence of current-task utility.
\end{abstract}

\section{Introduction}
A foundation-model agent is increasingly a changing system rather than a fixed artifact. New tools, feedback, users, sensors, and environments create repeated opportunities for post-training, model editing, memory revision, or adapter updates. Each update can improve the current objective while degrading behavior that was already relied upon. This stability--plasticity problem is central to continual learning \citep{wang2024survey,delange2022survey}; for deployed agents, it is also a change-management problem. Before a new model, adapter, prompt, memory, or policy is committed, an operator needs evidence about what changed.

Replay \citep{lopezpaz2017gem,smith2024replay}, parameter or subspace regularization \citep{kirkpatrick2017ewc,saha2021gpm}, and trust regions \citep{schulman2015trpo} intervene during candidate optimization. Their retention objectives are valuable, but success on historical examples used for training is different evidence from performance on a held-out acceptance set. CABA makes that distinction explicit without replacing those training methods.

We study an interface in which an arbitrary update generator proposes a candidate and a separate gate decides how much of it to commit. The gate retains historical observations or frozen representations excluded from candidate optimization. It measures behavioral divergence between the current and interpolated candidate policies, takes the maximum over anchors rather than an average, and accepts the largest tested interpolation step below a predeclared budget. We call this wrapper Certified Anchor-Based Adaptation (CABA).

CABA is an acceptance mechanism, not a complete continual-learning algorithm. It can wrap supervised, reinforcement-learning, LoRA \citep{hu2022lora}, editing, or replay candidates that expose comparable policy distributions and a declared update path. Discrete prompt or tool changes require a finite candidate path, not literal parameter interpolation. Small KL does not imply utility, fairness, security, or global safety. A deployment needs two keys: current-task utility and protected-behavior retention. CABA supplies the latter and records the budget, anchor and candidate identifiers, tested path, accepted coefficient, maximum divergence, and worst anchor.

The closest recent systems make the novelty boundary important. STABLE gates continual LoRA edits using exact-match loss, confidence change, or KL and may rescale or reject a merge \citep{hoy2025stable}. Harness Continual Learning separates proposal from commitment for evolving prompts, memories, tools, and routers \citep{kang2026harness}. We do not claim that separation or label-free KL alone is novel. Our contribution is the formalization of a held-out worst-anchor contract and the assumptions under which its finite check extends beyond stored anchors.

\section{Problem Setup}
Let $o$ be an observation, $z=\phi(o)$ a representation, and $\pi_\theta(\cdot\mid z)$ a stochastic policy or output distribution. At update $t$, the learner receives batch $D_t$ and partitions it before candidate training:
\begin{equation}D_t=D_t^{\mathrm{train}}\,\dot\cup\,D_t^{\mathrm{anc}}.\end{equation}
An arbitrary adaptation routine uses $D_t^{\mathrm{train}}$ and permitted non-anchor resources to produce candidate $\widehat\theta_t$ from current accepted parameters $\theta_{t-1}$. The gate is then evaluated on accumulated held-out anchors
\begin{equation}A_{<t}=\bigcup_{s<t}\{\phi(o):o\in D_s^{\mathrm{anc}}\}.\end{equation}
Action labels are not required for the retention test because both policies can be queried on the same representation. Labels may be stored for a replay baseline or separate utility test, but are not inputs to the CABA certificate.

For old parameters $\theta$ and candidate parameters $\theta'$, define forward KL in nats:
\begin{equation}d(z;\theta,\theta')=D_{\mathrm{KL}}\!\left(\pi_\theta(\cdot\mid z)\,\middle\|\,\pi_{\theta'}(\cdot\mid z)\right).\end{equation}
Forward KL penalizes probability removed from actions supported by the current policy. Other divergences can be substituted, but the trajectory result below uses a divergence controlling total variation. Recent work relates policy KL to forgetting in continual post-training while also showing that equal KL values need not imply equal task accuracy \citep{shenfeld2025razor,hoy2025stable}. CABA treats KL as a change budget, not as a direct performance metric.

\paragraph{Threat model.} CABA targets accidental behavioral drift caused by a candidate update. It assumes the old and candidate policies can be evaluated faithfully on protected anchors, the anchor store has integrity, and the declared divergence is meaningful for the action interface. Holding anchors out reduces direct training-set reuse, but repeated accept/reject feedback still conveys anchor information. The finite check does not defend against a malicious candidate that detects auditing, compromised evaluation code, failures shared by both policies, or hazards outside protected support. The mathematical certificate is conditional on coverage and regularity; the empirical anchor check alone is not a global-safety guarantee.

\paragraph{Protected behavior and current utility.} Let $P_t$ be the protected historical representation distribution and $U_t(\theta)$ a current-task utility. A deployment rule should require both a retention condition and a separately evaluated utility condition, for example
\begin{equation}\max_{z\in A_{<t}}d(z;\theta_{t-1},\theta_t)\leq\epsilon_t,\qquad \operatorname{LCB}\{U_t(\theta_t)-U_t(\theta_{t-1})\}\geq\nu_t,\end{equation}
where the lower confidence bound uses fresh utility data after path selection, or simultaneous coverage over the tested path. Our experiment isolates the retention key; this equation states the deployment interface.

\section{CABA: A Candidate-Agnostic Commit Gate}
Let $\Gamma\subset[0,1]$ be a finite interpolation grid containing $0$. For adapter or policy-head parameters, define
\begin{equation}\theta_t(\alpha)=(1-\alpha)\theta_{t-1}+\alpha\widehat\theta_t.\end{equation}
CABA computes the worst-anchor divergence
\begin{equation}g_t(\alpha)=\max_{z\in A_{<t}}d(z;\theta_{t-1},\theta_t(\alpha))\end{equation}
and selects
\begin{equation}\alpha_t=\max\{\alpha\in\Gamma:g_t(\alpha)\leq\epsilon_t\},\qquad \theta_t=\theta_t(\alpha_t).\end{equation}
We define the empty-anchor score as zero and require $\epsilon_t\geq0$. The first candidate passes the retention key only; deployment also needs the utility key. If no nonzero grid point is feasible, the update is rejected. Exhaustive grid evaluation needs no monotonicity assumption. With a verified numerical evaluation-error bound $e_{\mathrm{num}}$, accept only if $\widehat g_t(\alpha)+e_{\mathrm{num}}\leq\epsilon_t$.

\paragraph{Why worst-anchor control?} An average divergence can remain small while one protected state changes sharply. The maximum makes every stored anchor a hard constraint and identifies the anchor closest to rejection. This is conservative and sensitive to corrupted or irrelevant anchors; a high-quantile alternative is a different contract rather than a silent substitute.

\paragraph{Why hold anchors out?} The finite check is deterministic once the anchors and evaluated policies are fixed. Excluding audit anchors from candidate optimization permits a disjoint replay buffer, but repeated accept/reject feedback still conveys information about anchors. Held-out therefore does not mean statistically independent across updates; stochastic generalization claims need fresh audits or an adaptive-valid analysis.

\paragraph{Anchor lifecycle.} After deciding update $t$, representations from $D_t^{\mathrm{anc}}$ are eligible for future gates. A bounded system can use reservoir sampling, coverage-aware coresets, or safety-prioritized anchors. Hindsight Anchor Learning optimizes anchors for training \citep{chaudhry2021hindsight}; Memory Anchors selects high-conflict robot experiences for replay \citep{du2026memory}. CABA can use their selection insights, but the audit copy remains excluded from candidate optimization. Deletion, refresh, and access policies must be logged because changing the anchor set changes the retention claim.

\begin{algorithm}[t]\caption{CABA commit gate with an audit receipt}\label{alg:caba}\begin{algorithmic}[1]
\REQUIRE Current parameters $\theta$, candidate $\widehat\theta$, anchors $A$, grid $\Gamma$, budget $\epsilon$
\STATE Initialize feasible set $F\leftarrow\{0\}$ and receipt $R\leftarrow\varnothing$
\FOR{each $\alpha\in\Gamma\setminus\{0\}$}
\STATE $\theta(\alpha)\leftarrow(1-\alpha)\theta+\alpha\widehat\theta$
\STATE Evaluate $q_\alpha(z)\leftarrow D_{\mathrm{KL}}(\pi_\theta(\cdot\mid z)\|\pi_{\theta(\alpha)}(\cdot\mid z))$ for $z\in A$
\STATE Record $g(\alpha)\leftarrow\max_{z\in A}q_\alpha(z)$ and a maximizing anchor; use $0$ and null if $A=\varnothing$
\IF{$g(\alpha)\leq\epsilon$}\STATE $F\leftarrow F\cup\{\alpha\}$\ENDIF
\ENDFOR
\STATE $\alpha^\star\leftarrow\max F$; append model, anchor, grid, budget, and score hashes to $R$
\RETURN Committed parameters $\theta(\alpha^\star)$ and receipt $R$
\end{algorithmic}\end{algorithm}

\section{Conditional Certification Results}
The results separate the finite computation performed by Algorithm~\ref{alg:caba} from assumptions required to extend it beyond stored anchors.
\begin{assumption}[Representation coverage] For protected distribution $P$ with support $S\subset\mathcal Z$, the finite nonempty anchor set $A\subset S$ is a $\rho$-net: every $z\in S$ has $z_i\in A$ with $\|z-z_i\|_2\leq\rho$.\end{assumption}
\begin{assumption}[Divergence regularity] For accepted pair $(\theta,\theta')$, $z\mapsto d(z;\theta,\theta')$ is $L$-Lipschitz on $S$.\end{assumption}
\begin{theorem}[Finite-anchor retention certificate] If CABA accepts $\theta'$ with $\max_{z_i\in A}d(z_i;\theta,\theta')\leq\epsilon$ and the assumptions hold, then
\begin{equation}\sup_{z\in S}d(z;\theta,\theta')\leq\epsilon+L\rho.\end{equation}\end{theorem}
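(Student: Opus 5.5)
The plan is a pointwise argument: fix an arbitrary $z\in S$, move to a nearby anchor using the Representation coverage assumption, and pay for the move with the Divergence regularity assumption. The acceptance condition bounds the divergence at every anchor by $\epsilon$, and the Lipschitz property bounds the change in divergence over distance $\rho$ by $L\rho$. Since the resulting bound does not depend on $z$, taking the supremum finishes the proof. No compactness or attainment of the supremum is needed, because we only bound each value.

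The steps, in order:

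1. Fix $z\in S$. Coverage gives an anchor $z_i\in A$ with $\|z-z_i\|_2\le\rho$. The assumption states $A$ is nonempty and $A\subset S$, so $z_i$ lies in the domain where the Lipschitz bound holds; this is the one point to check.
2. Apply Divergence regularity to the pair $z,z_i\in S$:
\begin{equation*}
d(z;\theta,\theta')\le d(z_i;\theta,\theta')+L\|z-z_i\|_2 .
\end{equation*}
3. Use acceptance: $d(z_i;\theta,\theta')\le\max_{z_j\in A}d(z_j;\theta,\theta')\le\epsilon$. Combined with $\|z-z_i\|_2\le\rho$, this gives $d(z;\theta,\theta')\le\epsilon+L\rho$.
4. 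Since $z\in S$ was arbitrary, take the supremum over $S$.

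There is no substantive obstacle; the work is bookkeeping. Two points need care:

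\emph{Matching $\theta'$ to the assumption.} The accepted $\theta'$ must be exactly the parameter pair to which Divergence regularity is applied, namely $\theta'=\theta_t(\alpha_t)$. If CABA accepts with a numerical tolerance, $\widehat g_t(\alpha)+e_{\mathrm{num}}\le\epsilon_t$, then the true anchor maximum is still at most $\epsilon$, and the same chain goes through unchanged.

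\emph{The case $A=\varnothing$.} Here the algorithm's score is $0$ by convention. That convention is irrelevant to this theorem, because coverage explicitly requires a nonempty net.
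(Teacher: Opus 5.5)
Your proof is correct and takes the same approach as the paper: choose a covering anchor $z_i$, bound $d(z_i;\theta,\theta')$ by $\epsilon$ using acceptance, add $L\|z-z_i\|_2\leq L\rho$ using Lipschitz continuity, then take the supremum over $S$. Your two extra checks are sound refinements the paper leaves implicit: that $z_i\in A\subset S$ lies in the Lipschitz domain, and that a verified numerical tolerance still bounds the true anchor maximum by $\epsilon$.
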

\begin{proof}For any $z\in S$, choose a covering anchor $z_i$. Acceptance bounds $d(z_i;\theta,\theta')$ by $\epsilon$, and Lipschitz continuity adds at most $L\|z-z_i\|_2\leq L\rho$.\end{proof}

The theorem does not establish coverage from random sampling alone. Sparse anchors and sharp policies increase $L\rho$. For affine softmax logits $(W_jz+b_j)/T_{\mathrm{temp}}$, $j=0,1$, on $\|z\|_2\leq R$ with $T_{\mathrm{temp}}>0$, a sufficient constant is
\begin{equation}L\leq\frac{(\|W_0\|_2+\|W_1\|_2)(\|W_0-W_1\|_2R+\|b_0-b_1\|_2)}{2T_{\mathrm{temp}}^2}.\end{equation}
Here matrix norms are spectral. Appendix~\ref{app:regularity} derives this computable bound and its Jacobian extension. Neither a maximum sampled gradient nor a sampled radius certifies a support-wide bound. No empirical $L$ estimate is claimed.

\begin{theorem}[High-probability coverage] Fix $P$, $\rho>0$, and a finite cover of $S$ by $N$ balls $B(c_j,\rho/2)$ before sampling. Suppose each ball has mass at least $\mu_\rho>0$. For $m$ independent anchors drawn from $P$,
\begin{equation}\Pr(A\text{ is a }\rho\text{-net})\geq1-Ne^{-m\mu_\rho}.\end{equation}
For $\delta\in(0,1)$, $m\geq\mu_\rho^{-1}\log(N/\delta)$ suffices for coverage with probability at least $1-\delta$. On this event, every accepted pair satisfying divergence regularity obeys the finite-anchor certificate, including pairs selected using these anchors.\end{theorem}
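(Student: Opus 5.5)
The plan is the standard ``hit every cell'' argument, followed by a union bound. First I reduce the $\rho$-net property to a hitting event. Suppose every ball $B(c_j,\rho/2)$ of the fixed cover contains at least one anchor. Take any $z\in S$. It lies in some ball $B(c_j,\rho/2)$, and an anchor $z_i$ lies in the same ball, so by the triangle inequality
\begin{equation*}
\|z-z_i\|_2\leq\|z-c_j\|_2+\|c_j-z_i\|_2\leq\rho/2+\rho/2=\rho .
\end{equation*}
Each anchor is drawn from $P$, so $z_i\in S$ almost surely, which matches the requirement $A\subset S$. The sample is nonempty because $m\geq1$. Hence
\begin{equation*}
\{A\text{ is not a }\rho\text{-net}\}\subseteq\bigcup_{j=1}^N E_j,\qquad E_j=\{\text{no anchor lies in }B(c_j,\rho/2)\}.
\end{equation*}
The cover is fixed before sampling, so each $E_j$ is a well-defined event of the i.i.d.\ sample.

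Second, I bound each miss probability. By independence and the mass assumption,
\begin{equation*}
\Pr(E_j)=\bigl(1-P(B(c_j,\rho/2))\bigr)^m\leq(1-\mu_\rho)^m\leq e^{-m\mu_\rho},
\end{equation*}
using $1-x\leq e^{-x}$. A union bound over the $N$ balls then gives failure probability at most $Ne^{-m\mu_\rho}$, which is the first claim. For the sample-size corollary, I solve $Ne^{-m\mu_\rho}\leq\delta$. This is equivalent to $m\geq\mu_\rho^{-1}\log(N/\delta)$.

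Third, I treat the adaptive-selection clause. This is the only conceptually delicate step, and I expect it to be the main obstacle. The argument is that the coverage event $\mathcal E$ depends only on the anchor sample and the predeclared cover. It does not depend on any candidate, grid, or accept/reject decision. Fix an outcome in $\mathcal E$. The Finite-anchor retention certificate is a deterministic implication: if $A$ is a $\rho$-net of $S$, $d(\cdot;\theta,\theta')$ is $L$-Lipschitz on $S$, and $\max_{z_i\in A}d(z_i;\theta,\theta')\leq\epsilon$, then $\sup_{S}d\leq\epsilon+L\rho$.

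This implication holds simultaneously for every pair $(\theta,\theta')$, however that pair was chosen. That includes pairs chosen as functions of $A$ via repeated gating. So no union bound over candidates is needed, and no independence between anchors and candidates is needed.

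The point to state carefully is that the regularity assumption itself must hold for the selected pair. The theorem assumes this rather than deriving it, and Lipschitz constants estimated from the anchors are not covered. Also, the cover and $\mu_\rho$ must not be chosen after seeing the sample; the ``fixed before sampling'' hypothesis guarantees this.
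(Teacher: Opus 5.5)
Your proposal is correct and follows the paper's proof essentially step for step: the per-ball miss probability $(1-P(B_j))^m\leq e^{-m\mu_\rho}$, a union bound over the $N$ balls, the triangle inequality through $c_j$ to get a $\rho$-net, and solving $Ne^{-m\mu_\rho}\leq\delta$ for $m$. The paper leaves the adaptive-selection clause implicit, and you spell it out correctly: the coverage event depends only on $A$ and the predeclared cover, while the finite-anchor certificate is a deterministic implication that holds for every accepted pair satisfying regularity. Your side remarks are also sound, namely that $A\subset S$ holds almost surely and that regularity must be assumed for the selected pair rather than derived from it.
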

Rare hazards may have tiny mass and need deliberate anchors. In a bounded doubling space, covering numbers can grow as $(D/\rho)^q$, making uniform coverage expensive. A fixed-capacity or conflict-selected store does not automatically satisfy the independent-sampling premise; its coverage must be justified separately.

\begin{corollary}[Compressed anchors] Under coverage, store $\widetilde z_i$ with $\|z_i-\widetilde z_i\|_2\leq\eta$ and verify $d(\widetilde z_i;\theta,\theta')\leq\epsilon$. Require $L$-Lipschitz regularity on $S\cup\{\widetilde z_i\}$, not only on $S$. Then
\begin{equation}\sup_{z\in S}d(z;\theta,\theta')\leq\epsilon+L(\rho+\eta).\end{equation}\end{corollary}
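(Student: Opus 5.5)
The plan is to repeat the argument of the Finite-anchor retention certificate with one extra triangle-inequality step. The Lipschitz property is invoked only between points of $S\cup\{\widetilde z_i\}$, which is why the corollary requires regularity on that enlarged set rather than on $S$ alone.

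Fix an arbitrary $z\in S$. By the Representation coverage assumption, the uncompressed anchor set $A=\{z_i\}$ is a $\rho$-net of $S$, so there is an index $i$ with $\|z-z_i\|_2\leq\rho$. The compressed anchor $\widetilde z_i$ satisfies $\|z_i-\widetilde z_i\|_2\leq\eta$, and the triangle inequality gives
\begin{equation*}
\|z-\widetilde z_i\|_2\leq\|z-z_i\|_2+\|z_i-\widetilde z_i\|_2\leq\rho+\eta .
\end{equation*}
Equivalently, $\{\widetilde z_i\}$ is a $(\rho+\eta)$-net of $S$. It need not be contained in $S$, and this is the only departure from the earlier theorem.

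Next I apply the regularity hypothesis to the pair $z,\widetilde z_i$. Both points lie in $S\cup\{\widetilde z_i\}$, where $d(\cdot;\theta,\theta')$ is $L$-Lipschitz, so
\begin{equation*}
d(z;\theta,\theta')\leq d(\widetilde z_i;\theta,\theta')+L\|z-\widetilde z_i\|_2\leq\epsilon+L(\rho+\eta).
\end{equation*}
The last inequality uses the verified check $d(\widetilde z_i;\theta,\theta')\leq\epsilon$ together with the distance bound above. Since $z\in S$ was arbitrary, taking the supremum gives the claim.

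No step is genuinely hard. The only point needing care is to avoid routing through $z_i$ via two Lipschitz applications. Doing so would require the divergence to be controlled at $z_i$, which was never evaluated. It would also need Lipschitz continuity on a path leaving $S$, unless one separately notes that $z_i\in S$. A single Lipschitz application directly between $z$ and $\widetilde z_i$ avoids both issues and explains the stated enlarged-domain hypothesis.
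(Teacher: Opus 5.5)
Your proof is correct and is essentially the paper's argument: the triangle inequality gives $\|z-\widetilde z_i\|_2\leq\rho+\eta$, and a single Lipschitz step on $S\cup\{\widetilde z_i\}$ from the verified compressed anchor gives $\epsilon+L(\rho+\eta)$. Your closing caveat is overcautious. Since $z_i\in A\subset S$ by the coverage assumption, routing through $z_i$ with two Lipschitz steps also works and gives the same bound, because $d(z_i;\theta,\theta')$ is bounded by Lipschitz continuity from $\widetilde z_i$ and never needs to be evaluated.
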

Thus storing compressed representations instead of raw observations has an explicit price. Quantization or projection is not free merely because the gate remains executable.

\begin{proposition}[Grid resolution] Let $g$ be finite, continuous, and nondecreasing on $[0,1]$, with $g(0)=0$ and $\epsilon\geq0$. Let $\Gamma=\{0=\gamma_0<\cdots<\gamma_M=1\}$ and $h=\max_j(\gamma_{j+1}-\gamma_j)$. For $\alpha_{\mathrm{cont}}=\max\{\alpha\in[0,1]:g(\alpha)\leq\epsilon\}$, the feasible grid choice satisfies $\alpha_t\geq\max\{0,\alpha_{\mathrm{cont}}-h\}$.\end{proposition}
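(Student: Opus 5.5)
The plan is to argue directly from the definitions. First I check that $\alpha_{\mathrm{cont}}$ is well defined. The feasible set $\{\alpha\in[0,1]:g(\alpha)\leq\epsilon\}$ contains $0$, because $g(0)=0\leq\epsilon$. By continuity of $g$ it is closed in the compact interval $[0,1]$, so its maximum exists. Monotonicity then shows the feasible set is exactly the interval $[0,\alpha_{\mathrm{cont}}]$: if $\alpha\leq\alpha_{\mathrm{cont}}$, then $g(\alpha)\leq g(\alpha_{\mathrm{cont}})\leq\epsilon$.

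Next I split into two cases. If $\alpha_{\mathrm{cont}}\leq h$, the bound $\max\{0,\alpha_{\mathrm{cont}}-h\}=0$ is trivial, since $0\in\Gamma$ is always feasible and hence $\alpha_t\geq0$. Otherwise $\alpha_{\mathrm{cont}}>h>0$. Let $j$ be the largest index with $\gamma_j\leq\alpha_{\mathrm{cont}}$; this exists because $\gamma_0=0$.
\begin{itemize}
\item If $j=M$, then $\gamma_M=1\geq\alpha_{\mathrm{cont}}$, so $\gamma_j=\alpha_{\mathrm{cont}}$.
\item Otherwise $\gamma_j\leq\alpha_{\mathrm{cont}}<\gamma_{j+1}$, so $\gamma_j>\gamma_{j+1}-h>\alpha_{\mathrm{cont}}-h$.
\end{itemize}
In either case $\gamma_j\geq\alpha_{\mathrm{cont}}-h$. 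Also $\gamma_j\in[0,\alpha_{\mathrm{cont}}]$, so $g(\gamma_j)\leq\epsilon$ and $\gamma_j$ is feasible on the grid. Since $\alpha_t$ is the maximum feasible grid point, $\alpha_t\geq\gamma_j\geq\alpha_{\mathrm{cont}}-h$.

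The only step needing care is the use of monotonicity to turn "closest grid point below $\alpha_{\mathrm{cont}}$" into "feasible grid point". Without monotonicity, $g$ could exceed $\epsilon$ at interior grid points, and the bound would fail. I would also remark that continuity is used only to guarantee that the supremum is attained, so that $\alpha_{\mathrm{cont}}$ is a maximum rather than a supremum. If one only had the supremum, the same argument with $\alpha_{\mathrm{cont}}$ replaced by any feasible point arbitrarily close to it yields the bound in the limit.
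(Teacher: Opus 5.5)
Your proof is correct. It is exactly the argument the paper leaves implicit, since the appendix gives no separate proof of this proposition: the largest grid point not exceeding $\alpha_{\mathrm{cont}}$ lies within $h$ of it and is feasible by monotonicity. One small slip: $\gamma_j\geq\gamma_{j+1}-h$ need not be strict (e.g.\ on a uniform grid), but the strict inequality $\gamma_{j+1}>\alpha_{\mathrm{cont}}$ still gives $\gamma_j>\alpha_{\mathrm{cont}}-h$, so your conclusion stands.
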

Without monotonicity, exhaustive grid evaluation remains valid, but binary search and this approximation need not be. For logits affine in $\alpha$ at each anchor, forward KL is convex in $\alpha$, minimized at zero, hence nondecreasing. This covers interpolation of a linear softmax head, but not general LoRA factor interpolation; Appendix~\ref{app:regularity} proves this distinction.

\begin{proposition}[Bounded-horizon behavior degradation] Let $J_H(\pi)=\mathbb E_\pi[\sum_{h=0}^{H-1}r_h]$ for $H\geq1$ in the same MDP with common initial, transition, and reward laws and $r_h\in[0,1]$. If the uniform bound holds with $\kappa=\epsilon+L\rho$ at every state reachable under the old policy, then
\begin{equation}\left|\frac{J_H(\pi_{\theta'})}{H}-\frac{J_H(\pi_\theta)}{H}\right|\leq\min\left\{1,\sqrt{\frac{H\kappa}{2}}\right\}\leq\min\left\{1,H\sqrt{\frac{\kappa}{2}}\right\}.\end{equation}
The KL chain rule and Pinsker give the first bound; stepwise maximal coupling gives the weaker second bound. Both require coverage of relevant trajectories, not merely observed prompts or states. The sharper bound can still be vacuous.\end{proposition}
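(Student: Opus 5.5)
The first bound follows by comparing the two trajectory laws directly, and the weaker second bound by a coupling argument. Let $\tau=(s_0,a_0,\dots,s_{H-1},a_{H-1})$ denote a length-$H$ trajectory, and let $\mathbb P$ and $\mathbb Q$ be its laws under $\pi_\theta$ and $\pi_{\theta'}$ in the common MDP. Since the rewards satisfy $r_h\in[0,1]$, the return $\sum_h r_h$ lies in $[0,H]$. Hence
\begin{equation*}
|J_H(\pi_{\theta'})-J_H(\pi_\theta)|\le H\cdot\mathrm{TV}(\mathbb P,\mathbb Q),
\end{equation*}
where the total-variation bound applies after shifting the return by $H/2$, or equivalently because the return is a function bounded in an interval of length $H$. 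Dividing by $H$ reduces everything to bounding $\mathrm{TV}(\mathbb P,\mathbb Q)$. The trivial bound $\mathrm{TV}\le1$ supplies the $1$ in each minimum. If rewards depend on the next state, I would include $s_{h+1}$ in the trajectory; this changes nothing because transitions are shared.

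For the sharper bound, apply the KL chain rule in the direction $D_{\mathrm{KL}}(\mathbb P\|\mathbb Q)$, matching the forward KL of the certificate. The initial-state and transition factors are identical under both laws, so their conditional KL terms vanish. What remains is
\begin{equation*}
D_{\mathrm{KL}}(\mathbb P\|\mathbb Q)=\sum_{h=0}^{H-1}\mathbb E_{s_h\sim\mathbb P}\,d(\phi(s_h);\theta,\theta').
\end{equation*}
Each expectation is taken under the old policy's state distribution and therefore only involves states reachable under $\pi_\theta$. At those states the hypothesis gives $d\le\kappa$, so $D_{\mathrm{KL}}(\mathbb P\|\mathbb Q)\le H\kappa$. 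Pinsker's inequality then gives $\mathrm{TV}\le\sqrt{H\kappa/2}$, which is the first bound.

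The main point of care is the direction of the KL. Because the expectation is under $\mathbb P$, the uniform bound is needed only on states reachable under the old policy, which is exactly the stated hypothesis. The reverse KL would require coverage of the new policy's reachable states, and I would remark on this. A second subtlety is that the policy acts on the representation $z=\phi(s)$, so the hypothesis must be read as a bound on $d(\phi(s);\theta,\theta')$ for reachable $s$.

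For the weaker second bound, couple the two processes step by step. Share the initial state, and at each step, while the trajectories agree, draw the actions from a maximal coupling of $\pi_\theta(\cdot\mid z)$ and $\pi_{\theta'}(\cdot\mid z)$, then draw the next state through the common transition. The per-step mismatch probability given agreement is $\mathrm{TV}(\pi_\theta,\pi_{\theta'})\le\sqrt{\kappa/2}$ by Pinsker, since the current state is reachable under the old policy. A union bound over the $H$ steps bounds the probability that the trajectories ever diverge by $H\sqrt{\kappa/2}$, and this probability dominates $\mathrm{TV}(\mathbb P,\mathbb Q)$. Finally, $\sqrt{H\kappa/2}\le H\sqrt{\kappa/2}$ holds because $H\ge1$, which gives the inequality between the two minima.
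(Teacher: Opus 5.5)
Your proposal is correct and follows the paper's own argument. For the first bound, the initial and transition factors cancel in the KL chain rule, the trajectory KL under the old-policy law is at most $H\kappa$, and Pinsker converts this to total variation; for the weaker bound, per-step Pinsker, stepwise maximal coupling, and a union bound give the result. You also spell out the bounded-return-to-total-variation step and the $H\geq 1$ comparison between the two minima, which the paper leaves implicit.
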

\begin{corollary}[Sequential composition] For accepted policies $\pi_0,\ldots,\pi_T$ in the same protected MDP, let $\kappa_t$ be the valid uniform KL bound for update $t$. Then
\begin{equation}\left|\frac{J_H(\pi_T)-J_H(\pi_0)}{H}\right|\leq\min\left\{1,\sum_{t=1}^T\sqrt{\frac{H\kappa_t}{2}}\right\}.\end{equation}\end{corollary}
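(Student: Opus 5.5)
The plan is a telescoping argument built on the triangle inequality, with the bounded-horizon proposition applied once per update. First write
\begin{equation}
J_H(\pi_T)-J_H(\pi_0)=\sum_{t=1}^T\bigl(J_H(\pi_t)-J_H(\pi_{t-1})\bigr),
\end{equation}
then divide by $H$ and take absolute values. This gives
\begin{equation}
\left|\frac{J_H(\pi_T)-J_H(\pi_0)}{H}\right|\leq\sum_{t=1}^T\left|\frac{J_H(\pi_t)-J_H(\pi_{t-1})}{H}\right|.
\end{equation}

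For each term, apply the bounded-horizon proposition with old policy $\pi_{t-1}$, new policy $\pi_t$, and $\kappa=\kappa_t$. This yields the per-step bound $\min\{1,\sqrt{H\kappa_t/2}\}\leq\sqrt{H\kappa_t/2}$. Summing over $t$ bounds the left-hand side by $\sum_t\sqrt{H\kappa_t/2}$.

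Separately, the left-hand side is at most $1$. Since $r_h\in[0,1]$, both $J_H(\pi_T)/H$ and $J_H(\pi_0)/H$ lie in $[0,1]$, so their difference has absolute value at most $1$. Taking the minimum of the two bounds gives the stated inequality.

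The step that needs care is checking that the proposition's hypothesis holds at each update. The proposition requires the uniform KL bound $\kappa_t$ at every state reachable under the \emph{old} policy of that step, which here is $\pi_{t-1}$, not $\pi_0$. I would read ``valid uniform KL bound for update $t$'' as exactly this condition, $\sup_z D_{\mathrm{KL}}(\pi_{t-1}(\cdot\mid z)\|\pi_t(\cdot\mid z))\leq\kappa_t$ on the states reachable under $\pi_{t-1}$, and state that reading explicitly. For example, this holds if each update's finite-anchor retention certificate applies on a protected support containing the reachable set of $\pi_{t-1}$.

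Forward KL is taken with respect to each step's own reference policy, and the per-step KL bounds are never combined with one another. The triangle inequality acts only on values, so no KL triangle inequality is needed, and that is why the bound adds square roots rather than KL terms. The common MDP assumption ensures all $J_H$ are defined with the same initial, transition, and reward laws, so the telescoping differences are comparable.
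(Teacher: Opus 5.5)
Your proposal is correct and follows the paper's argument: telescope the endpoint difference, apply the triangle inequality to return values rather than to KL terms, and invoke the bounded-horizon proposition once per update with $\pi_{t-1}$ as the reference policy. Your separate observation that normalized returns lie in $[0,1]$ is what yields the outer $\min\{1,\cdot\}$; the paper's sketch stops at $\sum_t\min\{1,\sqrt{H\kappa_t/2}\}$ and leaves that step implicit.
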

The additive bound is conservative: individually acceptable changes can accumulate, so a continual system must track per-update and cumulative budgets and periodically compare against a stable reference policy.

\section{Relationship to Continual-Learning Approaches}
\begin{table}[t]\centering\caption{Relationship to representative retention mechanisms. ``Post-candidate'' means retention is checked after candidate generation.}\label{tab:related}\small\begin{tabular}{p{0.9in}p{1.2in}p{1.1in}p{1.6in}}\toprule Approach & Retained information & Role in update & Primary retention evidence\\\midrule GEM / replay & Labeled past examples & Candidate training & Gradient or rehearsal performance\\Hindsight anchors & Learned labeled anchors & Candidate training & Empirical forgetting reduction\\TRPO / LCPO & Current or old policy states & Candidate optimization & Trust-region or method-specific analysis\\STABLE & Anchor questions and metrics & Post-candidate LoRA gate & Empirical rescale/reject behavior\\Memory Anchors & Labeled conflict experiences & Replay selection & Robot-policy retention\\CABA & Held-out observations or representations & Post-candidate commit gate & Worst-anchor test plus conditional certificate\\\bottomrule\end{tabular}\end{table}

\paragraph{Replay and regularization.} GEM constrains gradients using episodic memory \citep{lopezpaz2017gem}; Gradient Projection Memory protects representation-derived subspaces \citep{saha2021gpm}; adaptive replay policies allocate limited compute across remembered examples \citep{smith2024replay}. These methods seek a better candidate. CABA asks whether a candidate produced by any such method satisfies an independently declared contract.
\paragraph{Policy constraints and continual post-training.} TRPO introduced KL trust regions for stable policy improvement \citep{schulman2015trpo}. LCPO anchors outputs on old experiences in non-stationary environments \citep{hamadanian2025lcpo}. Recent CPO, MCPO, and CRL-VLA incorporate behavioral divergence into continual policy or VLA optimization \citep{luo2026rlforgets,liao2026mcpo,zeng2026crlvla}. RL's Razor connects smaller policy KL with reduced forgetting across language and robotic foundation models \citep{shenfeld2025razor}. These works motivate behavioral divergence, but do not themselves make a finite held-out post-update check a uniform certificate.
\paragraph{Gated updates and evolving agent state.} STABLE separates edit generation from acceptance and gates LoRA merges using exact-match drop, bits increase, or average KL \citep{hoy2025stable}. CABA does not claim this separation or label-free KL as novel; its distinction is a held-out worst-anchor contract with explicit off-anchor assumptions. Harness Continual Learning covers evolving prompts, memories, tools, and routers \citep{kang2026harness}. No comparative performance advantage is established.
\paragraph{Anchor-based robot learning.} Memory Anchors uses representation overlap and action conflict to enrich robot replay \citep{du2026memory}. Conflict-aware audit selection is complementary, but its benefit for CABA remains untested and the audit copy must remain separate from training.

\section{Controlled Study on UCI-HAR}
The experiment evaluates whether the gate exposes a measurable retention--plasticity trade-off. It is not evidence about a language model, VLA policy, or physical robot.
\paragraph{Dataset and stream.} We use the official Human Activity Recognition Using Smartphones dataset, containing inertial features from 30 subjects performing six activities \citep{anguita2013har}. We retain the official train/test partition, standardize using training statistics only, and form five chronological subject ranges $[1,5]$, $[6,10]$, $[11,15]$, $[16,20]$, and $[21,25]$. Updates use official-training examples in the current range; evaluation uses official-test examples in the corresponding range. Subjects outside these ranges are not reported. The 561-dimensional feature vector is treated as a frozen representation and a linear softmax head as the policy.
\paragraph{Anchor protocol.} Each incoming training group is split into candidate-training and held-out anchor portions. After its update decision, the group contributes at most 256 representations for future retention checks, for a maximum of 1,280 anchors. Candidate training never uses CABA audit anchors.
\paragraph{Methods.} Finetune commits the candidate head after each group. Replay trains on current examples plus stored anchor representations and labels. Frozen trains the initial head and rejects subsequent changes. CABA uses the same candidate as Finetune, evaluates $\Gamma=\{0,0.05,\ldots,1\}$, and commits the largest interpolation satisfying worst-anchor budget $\epsilon=0.05$. All adaptive methods use matched optimization settings and seeds $\{0,1,2\}$.
\paragraph{Metrics.} Let $A_{t,k}$ be test accuracy on group $k$ after update $t$. For $T=5$, final average accuracy is $T^{-1}\sum_{k=1}^T A_{T,k}$, and average forgetting is
\begin{equation}F_T=\frac1T\sum_{k=1}^T\left(\max_{t\in\{k,\ldots,T\}}A_{t,k}-A_{T,k}\right).\end{equation}
Forgetting is nonnegative by construction and measures loss relative to each group's best observed accuracy. It is not a safety metric and can be zero for a model that never learns, so it must be read together with final accuracy.
\begin{figure}[t]
\centering
\begin{tikzpicture}[x=1cm,y=1cm,font=\footnotesize]
\begin{scope}
\node[anchor=west,font=\bfseries] at (0,4.72) {Final average accuracy (\%)};
\foreach \x/\tick in {1.30/0,2.75/25,4.20/50,5.65/75,7.10/100} {
  \draw[gray!25] (\x,0.48)--(\x,4.08);
  \node[below,text=gray!70] at (\x,0.35) {\tick};
}
\draw[gray!65] (1.30,0.48)--(7.10,0.48);
\foreach \row/\method/\value/\barcolor in {
  3.62/Finetune/93.04/methodblue,
  2.68/Replay/93.17/methodteal,
  1.74/CABA/88.48/methodorange,
  0.80/Frozen/84.42/methodgray} {
  \pgfmathsetmacro{\barwidth}{0.058*\value}
  \node[anchor=east] at (1.18,\row) {\method};
  \path[fill=\barcolor,rounded corners=1.5pt]
    (1.30, {\row-0.17}) rectangle ({1.30+\barwidth}, {\row+0.17});
  \node[anchor=west] at ({1.38+\barwidth},\row) {\value};
}
\end{scope}
\begin{scope}[xshift=8.10cm]
\node[anchor=west,font=\bfseries] at (0,4.72) {Average forgetting (pp)};
\foreach \x/\tick in {1.30/0,2.57/0.4,3.83/0.8,5.10/1.2} {
  \draw[gray!25] (\x,0.48)--(\x,4.08);
  \node[below,text=gray!70] at (\x,0.35) {\tick};
}
\draw[gray!65] (1.30,0.48)--(5.10,0.48);
\foreach \row/\method/\value/\barcolor in {
  3.62/Finetune/1.01/methodblue,
  2.68/Replay/0.85/methodteal,
  1.74/CABA/0.00/methodorange,
  0.80/Frozen/0.00/methodgray} {
  \pgfmathsetmacro{\barwidth}{3.1667*\value}
  \pgfmathsetmacro{\visiblewidth}{max(\barwidth,0.05)}
  \node[anchor=east] at (1.18,\row) {\method};
  \path[fill=\barcolor,rounded corners=1.5pt]
    (1.30, {\row-0.17}) rectangle ({1.30+\visiblewidth}, {\row+0.17});
  \node[anchor=west] at ({1.38+\visiblewidth},\row) {\value};
}
\end{scope}
\end{tikzpicture}
\caption{Three-seed UCI-HAR results. Bars show means; Table~\ref{tab:results} reports sample standard deviations. Accuracy uses the full 0--100\% scale, and lower forgetting is better. CABA reaches zero measured forgetting but trades off final accuracy; Frozen shows why forgetting alone is insufficient.}\label{fig:results}
\end{figure}
\begin{table}[t]\centering\caption{UCI-HAR continual adaptation. Mean $\pm$ sample standard deviation over three seeds. Higher accuracy and lower forgetting are preferred.}\label{tab:results}\begin{tabular}{lcc}\toprule Method & Final average accuracy $\uparrow$ & Forgetting (pp) $\downarrow$\\\midrule Finetune & $93.04\pm0.40$ & $1.01\pm0.39$\\Replay & $93.17\pm0.29$ & $0.85\pm0.25$\\CABA ($\epsilon=0.05$) & $88.48\pm1.14$ & $0.00\pm0.00$\\Frozen & $84.42\pm0.82$ & $0.00\pm0.00$\\\bottomrule\end{tabular}\end{table}
\paragraph{Results.} Finetune and Replay reach about 93\% final average accuracy but forget 1.01 and 0.85 percentage points. CABA reduces measured forgetting to zero under the fixed anchor budget while reaching 88.48\% accuracy, 4.56 points below Finetune and 4.75 below Replay. Frozen also has zero forgetting but reaches only 84.42\%. The appropriate conclusion is not that CABA dominates replay. The gate converts retention into an explicit constraint and exposes how much adaptation the declared constraint blocks.
\paragraph{Budget diagnostic.} A stricter operating point, $\epsilon=0.02$, yields 86.17\% final accuracy and 0.02 percentage points average forgetting in the same three-seed protocol. Budget selection must occur before inspecting final test performance; a sweep is sensitivity analysis, not permission to select a favorable test result after the fact.
\paragraph{What the experiment validates.} The study validates the executable finite-anchor gate and retention--plasticity control on a real sensor dataset. It does not validate coverage, estimate $\rho$ or $L$, or establish the uniform guarantee. It does not show that observation-only storage is private, that a large agent cannot overfit an audit distribution, or that small action KL preserves semantic task success.

\section{Deployment Design for Foundation-Model Agents}
For an LLM or VLA, the audited output can be a next-token, tool-selection, or action distribution with computable divergence. Internal LoRA changes generally change downstream features despite frozen base weights: store raw contexts and evaluate both complete policies. A genuinely frozen feature map permits cached representations. The guarantee concerns the audited stochastic policy, not a different greedy decoder or untested tool execution.
\paragraph{Evaluation harness.} An agent-improvement loop has proposal, training, held-out evaluation, and commitment stages. CABA belongs between training and commitment. The candidate-generation service receives no audit-anchor identifiers. The evaluator loads current and candidate adapters, computes every declared grid point, checks the current-utility key, signs the receipt, and updates the serving pointer only after both keys pass.
\paragraph{Scaling cost.} With $m$ anchors, $G=|\Gamma|$, and evaluation cost $C_\pi$, exhaustive gating costs $O(mGC_\pi)$; old-policy logits can be cached. A proven monotone path permits $O(mC_\pi\log G)$ grid search, but sampled monotonicity alone is insufficient. A coreset trades memory against coverage radius. These are operation counts, not measured runtime results.
\paragraph{Anchor selection and refresh.} Random anchors approximate frequent behavior and may miss rare hazards. A defensible store combines distributional samples, known safety-critical states, task-boundary cases, representation-coverage points, and conflict points. Refresh anchors when sensors, prompts, tools, or the frozen backbone change, because the prior coverage statement may no longer concern the deployed representation space.
\paragraph{Beyond one-step retention.} Maintain drift from the immediately previous policy, drift from a stable reference, and task-level performance on historical evaluations. Per-update CABA controls the first; cumulative comparisons and semantic regression tests remain necessary.

\section{Discussion: Scope of Evidence and Evaluation}
The finite-anchor result is the empirically supported part of CABA in this paper. We do not estimate a coverage radius or support-wide Lipschitz constant on UCI-HAR: the experiment reports a finite held-out check, and empirical maxima over observed features would not certify either quantity on full protected support. Reporting a numerical value for $\epsilon+L\rho$ without defensible support bounds would overstate the evidence. A synthetic example could illustrate the theorem under stipulated assumptions, but would not establish those assumptions for UCI-HAR or an agent deployment.

The exhaustive cost can be made concrete without presenting it as a runtime benchmark. The anchor cap is 1,280 and the grid has 21 points including $\alpha=0$. Since zero is known feasible without policy inference, evaluating all 20 nonzero points performs at most 25,600 anchor--coefficient divergence evaluations at that cap. For the six-class, 561-feature linear head, computing candidate distributions at every pair is on the order of 86 million multiply-accumulates, before batching or implementation-specific optimizations. This is a derived operation-count upper bound, not measured wall-clock time. Caching old-policy logits reduces repeated work. A verified monotone path permits binary search; otherwise skipping grid points changes the algorithm's guarantee.

No same-candidate comparison against a STABLE-style gate is reported. Such a comparison should hold the old policy, candidate generator and candidate, interpolation path, audit data, and utility test fixed, then compare acceptance decisions and costs. Comparing methods that train different candidates would confound candidate quality with gate behavior. Appendix~\ref{app:protocol} separates these factors.

The worst-anchor contract is intentionally sensitive: a stale or corrupted anchor may reject a useful candidate, while compromised anchor representations or evaluator integrity can invalidate the measured claim. Adversarially chosen anchors can leave important support uncovered. A high-quantile gate could improve tolerance to isolated outliers, but it no longer guarantees that every retained anchor passes; a robustness claim needs a predeclared contamination model, authenticated anchor provenance, and a separate coverage argument. We report no anchor-corruption or quantile-gating experiment.

\paragraph{Prospective robustness and gate comparison.} A follow-up should first freeze a candidate generator and produce one candidate per update, then pass that identical candidate, interpolation path, old policy, and audit set through unrestricted commit, a STABLE-style gate, and CABA. The primary paired outcomes would be accepted coefficient, worst-anchor and average KL, current-task utility, and evaluator cost; an end-to-end stream would then measure how different commit decisions alter later candidates. This distinguishes the gate's effect from differences in candidate training.

Anchor sensitivity should be reported separately for stale-but-valid anchors, corrupted representations, and deliberate contamination. A simple robust empirical contract could permit at most $b$ of $m$ anchors to exceed budget, $\sum_{i=1}^{m}\mathbf{1}[d(z_i;\theta,\theta')>\epsilon]\leq b$. This changes the guarantee: it bounds the number of observed violations, not the maximum divergence over anchors or protected support. Any selection, quarantine, or replacement rule must be fixed before final evaluation and accompanied by a coverage analysis for the retained set. Randomly perturbing anchor features would test evaluator sensitivity but would not model adversarially selected states; these should not be conflated.

For UCI-HAR, a stronger certificate study would need a justified domain/support definition, a defensible covering construction, and a domain-wide Jacobian bound for the evaluated policy pair. Empirical nearest-neighbor radii, sampled gradient maxima, and their resulting plug-in values can be reported as diagnostics, but not as the uniform certificate unless accompanied by valid upper bounds. For an LLM/VLA evaluation, the same paired design should additionally hold prompt or observation histories fixed, use an independently measured utility set, version the complete policy and representation, and report both distributional drift and task outcomes. These measurements remain future work; no numerical results are implied here.

For an LLM, an audit item should identify a versioned conversation history and compare next-token or tool-selection distributions under complete old and candidate systems. For a VLA, it should identify a versioned observation--instruction history and compare action distributions; trajectory outcomes and safety events require separate evaluation. If a backbone or feature extractor changes, raw-context reevaluation or a certified representation-error allowance is needed. In both cases, current utility must be measured on separate data and paired with an independently protected retention test. These are deployment requirements, not results established by the UCI-HAR proxy.

\section{Limitations, Broader Impact, and Conclusion}
\paragraph{Limitations.} Coverage is the main limitation: random anchors can miss rare states, and the affine-softmax regularity bound may be loose. Neither $\rho$ nor $L$ is estimated in the UCI-HAR study; its finite checks do not instantiate the uniform certificate. Long-horizon bounds can be vacuous, nonlinear interpolation can cross high-drift regions, KL is asymmetric, and representations can reveal sensitive information. Evidence remains limited to one linear head, five subject groups, and three seeds. No foundation-model or embodied validation, direct STABLE comparison, or anchor-corruption experiment is claimed.
\paragraph{Broader impact.} A separate commit gate can make continual updates easier to audit, compare, reject, and roll back. It can also create false confidence if the anchor set is narrow, the evaluator is compromised, or ``certified'' is reported without assumptions. Report finite checks and off-anchor assumptions separately. Protected anchors must be governed as sensitive deployment data, and a retention gate must not prevent necessary corrections to harmful old behavior; explicit exceptions and human authorization are required.
\paragraph{Conclusion.} CABA formalizes a retention contract after candidate generation. Its contribution is the explicit connection between finite checks, coverage, regularity, path choice, and cumulative drift; the underlying inequalities are standard. The controlled study illustrates retention's accuracy cost, not foundation-model effectiveness. A future LoRA or VLA study should compare post-update gates on shared candidates, then vary training-time KL, replay, and anchor selection separately. Until then, CABA is a theory-led framework rather than an empirically superior continual learner.

\appendix
\section{Proofs and Additional Derivations}\label{app:proofs}
A ball $B_j$ is missed with probability $(1-P(B_j))^m\leq e^{-m\mu_\rho}$. A union bound gives simultaneous occupancy probability at least $1-Ne^{-m\mu_\rho}$. For any $z\in S$, its covering ball contains anchor $z_i$ with $\|z-z_i\|_2\leq\|z-c_j\|_2+\|c_j-z_i\|_2\leq\rho$. Thus $A$ is a $\rho$-net. Solving $Ne^{-m\mu_\rho}\leq\delta$ proves the sample bound.

Under anchor compression, $\|z-\widetilde z_i\|_2\leq\|z-z_i\|_2+\|z_i-\widetilde z_i\|_2\leq\rho+\eta$. The gate bounds $d(\widetilde z_i)$ by $\epsilon$ and Lipschitz continuity on the enlarged domain adds $L(\rho+\eta)$. Regularity only on $S$ does not suffice if $\widetilde z_i\notin S$.

\subsection{Computable regularity bounds}\label{app:regularity}
Let scaled logits $u(z),v(z)$ be continuously differentiable on a convex domain containing protected and compressed points. Set $p=\operatorname{softmax}(u)$, $q=\operatorname{softmax}(v)$, and $C(p)=\operatorname{diag}(p)-pp^\top$. Log-sum-exp differentiation \citep{gao2017softmax} gives
\begin{equation}\nabla_zD_{\mathrm{KL}}(p\|q)=(Du)^\top C(p)(u-v)+(Dv)^\top(q-p).\end{equation}
For unit $x$, the variance $x^\top C(p)x\leq(\max_i x_i-\min_i x_i)^2/4\leq1/2$. Hence $\|C(p)\|_2\leq1/2$ and softmax is $1/2$-Lipschitz. Domain-wide bounds $\|Du\|_2\leq J_0$, $\|Dv\|_2\leq J_1$, and $\|u-v\|_2\leq B$ yield $L\leq(J_0+J_1)B/2$. Affine substitution proves the main-text bound; smaller temperature increases it. Nonlinear heads need certified domain-wide Jacobian bounds. Sampled gradient maxima are diagnostics, not upper certificates, and do not establish coverage.

\paragraph{Monotone paths.} For fixed logits $u,v$, let $\Delta=v-u$, $F(w)=\log\sum_i e^{w_i}$, and $q_\alpha=\operatorname{softmax}(u+\alpha\Delta)$. Then
\begin{equation}D_{\mathrm{KL}}(q_0\|q_\alpha)=F(u+\alpha\Delta)-F(u)-\alpha q_0^\top\Delta,\quad\frac{d^2}{d\alpha^2}D_{\mathrm{KL}}(q_0\|q_\alpha)=\operatorname{Var}_{q_\alpha}(\Delta)\geq0.\end{equation}
The derivative is zero at $\alpha=0$, so each score and their maximum are nondecreasing. This covers linear heads, not general LoRA factors. A policy mixture $(1-\alpha)p+\alpha q$ also gives convex, nondecreasing forward KL, bounded by $-\log(1-\alpha)$ since the mixture is at least $(1-\alpha)p$. It changes deployment semantics and may require both models; no mixture experiment is claimed.

\paragraph{Trajectory and sequential bounds.} For trajectory laws $P,Q$, common initial, transition, and reward kernels cancel. The KL chain rule gives $D_{\mathrm{KL}}(P\|Q)=\sum_{h=0}^{H-1}\mathbb{E}_P d(z_h;\theta,\theta')\leq H\kappa$. Pinsker bounds total variation by $\sqrt{H\kappa/2}$. Alternatively, per-state Pinsker and maximal coupling give disagreement probability at most $\sqrt{\kappa/2}$ per step; a union bound proves the weaker result. Autoregressive certification must cover relevant histories, not just initial prompts. The triangle inequality bounds normalized endpoint return difference by $\sum_t|J_H(\pi_t)-J_H(\pi_{t-1})|/H\leq\sum_t\min\{1,\sqrt{H\kappa_t/2}\}$. This sums return differences, not KL divergences.

\section{Reproducibility and Evaluation Receipt}\label{app:receipt}
An accompanying artifact should expose fields rather than only aggregate scores. The study fixes the official partition, subject ranges, anchor capacity, grid, KL direction, budget, baselines, metrics, and three seeds. Exact optimization hyperparameters and per-update receipt values should accompany a released configuration before independent replication claims are made.
\begin{table}[h]\centering\caption{Minimum machine-readable CABA evaluation receipt.}\small\begin{tabular}{p{1.3in}p{3.7in}}\toprule Field & Required record\\\midrule Data stream & Dataset version, official split, subject-to-group mapping, and exclusions\\Candidate training & Optimizer, learning rate, batch size, epochs, initialization, and seed\\Anchor split & Split fraction, sampling seed, capacity, replacement rule, and representation version\\Gate & Divergence direction, budget, grid, numerical precision, and batching\\Decision & Accepted coefficient, maximum anchor score, worst-anchor identifier, utility-key result\\Models & Hashes for old, candidate, committed parameters, and frozen backbone\\Metrics & Per-group accuracy matrix, final accuracy, forgetting, uncertainty procedure\\Failures & Rejected updates, infeasible budgets, missing anchors, evaluator errors\\\bottomrule\end{tabular}\end{table}

\section{Recommended Foundation-Model Validation Protocol}\label{app:protocol}
Compare unrestricted commit, STABLE-style gates, and CABA with fixed old policy, candidate, path, and audit data. Cross gates with candidate generators using optimization-time KL or replay; neither is a post-hoc gate for an identical candidate. Match memory across random, coverage-aware, and conflict-aware audits. Separately report end-to-end sequences where accepted histories diverge.

Report utility, historical performance, worst-anchor and average KL, full $g(\alpha)$ traces, accepted coefficients, rejections, reference drift, memory, compute, and wall time. For robots, separate action drift from rollout success and safety events. Predeclare budgets on development streams and hold out the final sequence. These comparisons remain prospective; none is claimed as completed here.

Training-time mechanisms include HiP-LoRA's spectral plasticity budget \citep{chen2026hiplora}; LifeLong-RFT and sequential VLA fine-tuning \citep{liu2026lifelong,hu2026vla}; FOREVER replay \citep{feng2026forever}; trajectory-mixed supervision \citep{khan2026tms}; sparse memory fine-tuning \citep{goyal2026sparse}; and Info-VLA representation constraints \citep{zhao2026infovla}. Gated HAR uses diagonal feature modulation rather than a post-update KL acceptance test \citep{rahimi2026gatedhar}. These are candidate generators or training constraints, not evidence that an external CABA gate improves their outcomes.

\section{Additional Reporting Boundaries}
Supported claims are: (1) CABA verifies a worst-anchor KL constraint under exact arithmetic or a validated numerical error allowance; (2) under explicit coverage and Lipschitz assumptions, the finite check extends to protected support; (3) under the stated MDP assumptions, uniform action KL bounds normalized return degradation; and (4) on UCI-HAR, the selected budget trades accuracy for measured retention.

Unsupported claims are: (1) random anchors cover all safety-relevant states of a foundation-model agent; (2) small KL guarantees semantic correctness, fairness, privacy, or global safety; (3) CABA outperforms replay, STABLE, Memory Anchors, or continual VLA methods; or (4) UCI-HAR predicts LLM, VLA, or robot performance. Trimming or quantile gating requires coverage by the retained subset; otherwise the uniform claim is lost. Reverse, symmetric, or Jensen--Shannon divergence requires matching support assumptions and constants. Label-free auditing remains vulnerable to poisoned observations or stale requirements; quarantine changes coverage. Backbone changes require raw-context reevaluation or a proved representation-error allowance, not merely an alignment score.
\end{document}